\icmltitlerunning{White-box vs Black-box: Bayes Optimal Strategies for Membership Inference}
\def \ie {\emph{i.e.}\xspace}
\def \eg {\emph{e.g.}\xspace}
\def \threshold {MAT\xspace}
	\newcommand {\mypar}[1]{\paragraph{#1}}
\definecolor{darkgreen}{RGB}{0, 140, 0}
\newcommand{\matthijs}[1]{{\color{blue}[\textbf{Matthijs}:#1]}}
\newcommand \prob {\mathbb{P}}
\newcommand \dist {\mathcal D}
\newcommand \iid {\underset{\text{i.i.d.}}{\sim}}
\newcommand \N {\mathbb N}
\newcommand \esp {\mathbb{E}}
\newcommand{\ZM}{\mathcal{T}}
\renewcommand{\det}[1]{\mathrm{det} \left( #1 \right) }
\newtheorem{theorem}{Theorem}
\newtheorem{definition}{Definition}
\newtheorem{property}{Property}
\DeclareMathOperator{\Tr}{Tr}
\begin{document}

\twocolumn[
\icmltitle{White-box vs Black-box: Bayes Optimal Strategies for Membership Inference}

\icmlsetsymbol{equal}{*}

\begin{icmlauthorlist}
\icmlauthor{Alexandre Sablayrolles}{inria,fb}
\icmlauthor{Matthijs Douze}{fb}
\icmlauthor{Yann Ollivier}{fb}
\icmlauthor{Cordelia Schmid}{inria}
\icmlauthor{Herv\'e J\'egou}{fb}
\end{icmlauthorlist}

\icmlaffiliation{inria}{University Grenoble Alpes, Inria, CNRS, Grenoble INP, LJK}
\icmlaffiliation{fb}{Facebook AI Research}

\icmlcorrespondingauthor{Alexandre Sablayrolles}{asablayrolles@fb.com}

\icmlkeywords{Machine Learning}

\vspace{0.7cm}
]

\printAffiliationsAndNotice{}  %

\begin{abstract}
Membership inference determines, given a sample and trained parameters of a machine learning model, whether the sample was part of the training set.
In this paper, we derive the optimal strategy for membership inference with a few assumptions on the distribution of the parameters. 
We show that optimal attacks only depend on the loss function, and thus black-box attacks are as good as white-box attacks. 
As the optimal strategy is not tractable, we provide approximations of it leading to several inference methods, 
and show that existing membership inference methods are coarser approximations of this optimal strategy.
Our membership attacks outperform the state of the art in various settings, ranging from a simple logistic regression to more complex architectures and datasets, such as ResNet-101 and Imagenet. 

\end{abstract}

\newcommand{\dummyfig}[1]{
  \centering
  \fbox{
    \begin{minipage}[c][0.25\textheight][c]{\linewidth}
      \centering{#1}
    \end{minipage}
  }
}

\section{Introduction}

\citet{ateniese2015hacking} state that ``\emph{it is unsafe to release trained classifiers since valuable information about the training set can be extracted from them}''.
The problem that we address in this paper, \ie, to determine whether a sample has been used to train a given model, is related to the privacy implications of machine learning systems. 
They were first discussed in the context of support vector machines~\citep{rubinstein09learning,Biggio14securitysvm}.  
The problem of ``unintended memorization'' \citep{carlini18secret} appears in most applications of machine learning, such as natural language processing systems~\citep{carlini18secret} or image classification~\cite{yeom18privacyrisk}.

More specifically, we consider the problem of membership inference, \ie, we aim at determining if a specific image was used to train a model, given only the (image, label) pair and the model parameters. 
This question is important to protect both the privacy and intellectual property associated with data. 
For neural networks, the privacy issue was recently considered by \citet{yeom18privacyrisk} for the MNIST and CIFAR datasets. 
The authors evidence the close relationship between overfitting and privacy of training images. 
This is reminiscent of prior membership inference attacks, which  employ the output of the classifier associated with a particular sample to determine whether it was used during training or not~\citep{shokri16membershipinference}.  

At this stage, it is worth defining the different levels of information to which the ``attacker'', \ie, the membership inference system, has access to. 
We assume that the attacker knows the data distribution and the specifications of the model (training procedure, architecture of the network, etc), even though they are not necessarily required for all methods. 
We refer to the \emph{white-box} setting as the case where the attacker knows all the network parameters. 
On a side note, the setup commonly adopted in differential privacy~\cite{dwork2006calibrating} corresponds to the white-box setting, where the attacker additionally knows all the training samples except the one to be tested. 

The \emph{black-box} setting is when these parameters are unknown.
For classification models, the attacker has only access to the output for a given input, in one of the following forms:  

~\hfill \begin{minipage}{0.95\linewidth}
\emph{(i)} the classifier decision; \newline
\emph{(ii)} the loss of the correct label;\newline
\emph{(iii)} the full response for all classes. %
\end{minipage}

Prior works on membership inference commonly assume (i) or (iii). 
Our paper focuses on the black-box case (ii), in which we know the loss incurred by the correct label. %
The state of the art in this setting are the shadow models proposed by~\citet{shokri16membershipinference}. 

In our work, we use a probabilistic framework to derive a formal analysis of the optimal attack. 
This framework encompasses both Bayesian learning, and noisy training, where the noise is injected~\citep{welling2011langevin} or comes from the stochasticity of SGD. 
Under mild assumptions on the distribution of the parameters, we derive the optimal membership inference strategy. 
This strategy only depends on the classifier through evaluation of the loss, thereby showing that black-box attacks will perform as well as white-box attacks in this optimal asymptotic setting.
This result may explain why, to the best of our knowledge, the literature does not report white-box attacks outperforming the state-of-the-art black-box-(ii) attacks. 

The aforementioned optimal strategy is not tractable, therefore we introduce approximations to derive an explicit method for membership inference. 
As a byproduct of this derivation, we show that state-of-the-art \makebox{approaches~\citep{shokri16membershipinference,yeom18privacyrisk}} are coarser approximations of the optimal strategy. 
One of the approximation drastically simplifies the membership inference procedure by simply relying on the loss and a calibration term. 
We employ this strategy to the more complex case of neural networks, and show that it outperforms all approaches we are aware of. %

In summary, our main contributions are as follows: 
\begin{itemize}%
\item We show, under a few assumptions on training, that the optimal inference only depends on the loss function, and not on the parameters of the classifier. In other terms, white-box attacks don't provide any additional information and result in the same optimal strategy. 

\item We employ different approximations to derive three explicit membership attack strategies. We show that state-of-the-art methods constitute other approximations. Simple simulations show the superiority of our approach on a simple regression problem. 
\item We apply a simplified, tractable, strategy to infer the membership of images to the train set in the case of the public image classification benchmarks CIFAR and Imagenet. It outperforms the state of the art for membership inference, namely the shadow models. 
\end{itemize}

The paper is organized as follows. Section~\ref{sec:related} reviews related work. 
Section~\ref{sec:model} introduces our probabilistic formulation and derives our main theoretical result. %
This section also discusses the connection between membership inference and differential privacy. 
Section~\ref{sec:practical} considers approximations for practical use-cases, which allow us to derive inference strategies in closed-form, some of which are connected with existing methods from the literature. 
Section~\ref{sec:algorithms} summarizes the practical algorithms derived from our analysis. 
Finally, Section~\ref{sec:experiments} considers the more difficult case of membership inference for real-life neural networks and datasets. %

\section{Related work}
\label{sec:related}

\mypar{Leakage of information from the training set. }
Our work is related to the topics of overfitting and memorization capabilities of classifiers.  
Determining what neural networks actually memorize from their training set is not trivial. A few recent works~\citep{zhang16understanding,yeom18privacyrisk} evaluate how a network can fit random labels. 
\citet{zhang16understanding} replace true labels by random labels and show that popular neural nets can perfectly fit them in simple cases, such as small datasets (CIFAR10) or Imagenet without data augmentation. 
\citet{krueger17memorization} extend their analysis and argue in particular that the effective capacity of neural nets depends on the dataset considered. 
In a privacy context, \citet{yeom18privacyrisk} exploit this memorizing property to watermark networks. 
As a side note, random labeling and data augmentation have been used for the purpose of training a network without any annotated data~\citep{dosovitskiy2014discriminative,bojanowski2017unsupervised}.  

In the context of differential privacy~\citep{dwork2006calibrating}, recent works~\citep{wang2016average,bassily2016algorithmic} suggest that guaranteeing privacy requires learning systems to generalize well, \ie, to not overfit. 
\citet{wang2015privacyforfree} show that Bayesian posterior sampling offers differential privacy guarantees. 
\citet{abadi16deep} introduce noisy SGD to learn deep models with differential privacy.

\vspace{-1em}
\mypar{Membership Inference.} A few recent works~\citep{,hayes17logan,shokri16membershipinference,long18understanding} have addressed membership inference. 
\citet{yeom18privacyrisk} propose a series of membership attacks and derive their performance. %
\citet{long18understanding} observe that some training images are more vulnerable than others and propose a strategy to identify them. %
\citet{hayes17logan} analyze privacy issues arising in generative models.
\citet{dwork2015traceability} and \citet{sankararaman2009genomicprivacy} provide optimal strategies for membership inference in genomics data.

\vspace{-1em}
\mypar{Shadow models.} 

Shadow models were introduced by \citet{shokri16membershipinference} in the context of black-box attacks. 
In this setup, an attacker has black-box-(iii) access (full response for all classes) to a model trained on a private dataset, and to a public dataset that follows the same distribution as the private dataset. 
The attacker wishes to perform membership inference using black-box outputs of the private model. 
For this, the attacker simulates models by training \emph{shadow models} on known splits from the public set. 
On this simulated models, the attacker can analyze the output patterns corresponding to samples from the training set and from a held-out set.
\citet{shokri16membershipinference} propose to train an attack model that learns to predict, given an output pattern, whether it corresponds to a training or held-out sample.
If the attack model simply predicts ``training" when the output activations fire on the correct class, this strategy is equivalent to \citet{yeom18privacyrisk}'s adversary. 
\citet{salem2019leaks} further show that shadow models work under weaker assumptions than those of \citet{shokri16membershipinference}.

\section{Membership inference model}
\label{sec:model}

In this section, we derive the Bayes optimal performance for membership inference (Theorem~\ref{thm:optimal_strategy}). 
We then make the connection with differential privacy and propose looser guarantees that prevent membership inference. %

\subsection{Posterior distribution of parameters}

Let $ \dist $ be a data distribution, from which we sample $ n \in \N $ points $ z_1, z_2, \dots, z_{n} {\iid} \dist$.
A machine learning algorithm produces parameters $ \theta $ that incur a low loss $ \sum_{i=1}^n  \ell(\theta, z_i)$. 
Typically in the case of classification, $z = (x, y)$ where $x$ is an input sample and $y$ a class label, and the loss function $\ell$ is high on samples $(x, y')$ for which $y' \neq y$.

We assume that the machine learning algorithm has some randomness, and we model it with a posterior distribution over parameters $ \theta | z_1, \dots, z_n $.
The randomness in $ \theta $ either comes from the training procedure (e.g., Bayesian posterior sampling), or arises naturally, as is the case with Stochastic Gradient methods.

In general, we assume that the posterior distribution follows: 
\begin{align}
\label{eq:temperature}
\prob(\theta~|~z_1, \dots, z_n) &\propto e^{ - \frac{1}{T} \sum_{i=1}^n \ell(\theta, z_i) },
\end{align}

where $ T $ is a temperature parameter, intuitively controlling the stochasticity of $ \theta $. 
$ T = 1 $ corresponds to the case of the Bayesian posterior, $ T \to 0 $ the case of MAP (Maximum A Posteriori) inference and a small T to the case of averaged SGD \citep{polyak1992averaging}.
Note that we do not include a prior on $ \theta $: we assume that the prior is either uniform on a bounded $ \theta $, or that it has been factored in the loss term $ \ell $.

\subsection{Membership inference}

Given $ \theta $ produced by such a machine learning algorithm, membership inference asks the following question: What information does $ \theta $ contain about its training set $ z_1, \dots, z_n $?

Formally, we assume that binary membership variables $ m_1, m_2, \dots,  m_{n}$ 
are drawn independently, with probability $\lambda = \prob(m_i=1)$. 
The samples for which $ m_i = 0 $ are the test set, while the samples for which $ m_i = 1 $ are the training set. 
Equation~(\ref{eq:temperature}) becomes 
\begin{align}
\label{eq:temperaturem}
\prob(\theta~|~z_1, \dots, z_n, m_1, \dots, m_n) &\propto e^{ - \frac{1}{T} \sum_{i=1}^n m_i\ell(\theta, z_i) },
\end{align}

Taking the case of $ z_1 $ without loss of generality, membership inference determines, given parameters $ \theta $ and sample $ z_1 $, whether $ m_1 = 1 $ or $ m_1 = 0$. 
\begin{definition}[Membership inference]
Inferring the membership of sample $z_1$ to the training set amounts to computing:
\begin{align}
\mathcal{M}(\theta, z_1) := \prob( m_1 = 1~|~\theta, z_1).
\end{align}
\end{definition}

{\bf Notation.}
We denote by $ \sigma $ the sigmoid function $ \sigma(u) = (1 + e^{-u})^{-1} $. %
We collect the knowledge about the other samples and their memberships into the set $\ZM = \{z_2,...,z_n, m_2,..., m_n\}$.

\subsection{Optimal membership inference}

In Theorem \ref{thm:optimal_strategy}, we derive the explicit formula for $\mathcal{M}(\theta, z_1)$.

\begin{theorem}
\label{thm:optimal_strategy}
Given a parameter $ \theta $ and a sample $ z_1 $,
the optimal membership inference is given by:
{\small
\begin{align}
\mathcal{M}(\theta, z_1) &= \esp_{\ZM} \left[  \sigma \left( \log  \left(
\frac{\prob(\theta ~|~ {m_1=1, z_1, \ZM})}{\prob(\theta ~|~ {m_1=0, z_1, \ZM})} 
\right) + t_{\lambda}
\right) \right],
\end{align}
}%
with $ t_{\lambda} = \log \left( \frac{\lambda}{1- \lambda} \right) $.
\end{theorem}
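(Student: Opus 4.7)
The plan is to condition on the auxiliary data $\ZM = \{z_2,\dots,z_n,m_2,\dots,m_n\}$ and reduce the problem to a pointwise application of Bayes' rule, then aggregate via the tower property. First I would write
\begin{align*}
\mathcal{M}(\theta, z_1) \;=\; \prob(m_1 = 1 \mid \theta, z_1) \;=\; \esp_{\ZM \mid \theta, z_1}\!\left[\prob(m_1 = 1 \mid \theta, z_1, \ZM)\right],
\end{align*}
with the understanding that the outer expectation is over the posterior of $\ZM$ given $(\theta, z_1)$. This step alone produces the outer $\esp_\ZM$ that appears in the statement.

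Next, for each fixed value of $\ZM$, I would apply Bayes' rule to the inner probability:
\begin{align*}
\prob(m_1 = 1 \mid \theta, z_1, \ZM) \;=\; \frac{\prob(\theta \mid m_1=1, z_1, \ZM)\,\prob(m_1=1 \mid z_1, \ZM)}{\sum_{b \in \{0,1\}}\prob(\theta \mid m_1=b, z_1, \ZM)\,\prob(m_1=b \mid z_1, \ZM)}.
\end{align*}
Because the membership flags are drawn independently of the samples with $\prob(m_1=1)=\lambda$, the conditional prior $\prob(m_1=b \mid z_1, \ZM)$ collapses to $\lambda$ or $1-\lambda$. Setting $p_b := \prob(\theta \mid m_1=b, z_1, \ZM)$, the fraction becomes $\lambda p_1 / (\lambda p_1 + (1-\lambda) p_0)$, which by the elementary identity $a/(a+b) = \sigma(\log(a/b))$ equals $\sigma(\log(p_1/p_0) + t_\lambda)$. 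Substituting this back inside the tower expression recovers the claimed formula.

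The only delicate point, and the one I would flag in the write-up, is the precise meaning of $\esp_\ZM$: the tower property forces the expectation to be taken under the posterior $p(\ZM \mid \theta, z_1)$ rather than the i.i.d.\ prior $p(\ZM) = \prod_{i\ge 2} p(z_i)\,\prob(m_i)$; if read as a prior expectation the identity would be false in general. Beyond clarifying this, the argument is a one-line conditional Bayes' rule plus the logit/sigmoid identity, so I do not anticipate any substantive obstacle.
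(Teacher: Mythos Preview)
Your proposal is correct and matches the paper's proof essentially step for step: law of total expectation to condition on $\ZM$, Bayes' rule on $m_1$, independence of $m_1$ from $(z_1,\ZM)$ to reduce the prior to $\lambda$, and the identity $a/(a+b)=\sigma(\log(a/b))$. Your caveat about the meaning of $\esp_{\ZM}$ is apt---the paper simply writes $\esp_{\ZM}$ without specifying the conditioning, but as you note the tower property forces it to be the posterior $\ZM \mid \theta, z_1$.
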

\begin{proof}

By the law of total expectation, we have:
\begin{align}
\mathcal{M}(\theta, z_1)  &= \prob( m_1 = 1~|~\theta, z_1) \\
&=  \esp_{\ZM} \left[ \prob \left( m_1 = 1 ~|~ \theta, z_1, \ZM \right) \right].
\end{align}

Applying Bayes' formula: 
{\small
\begin{align}
 \prob \left( m_1 = 1 ~|~ \theta, z_1, \ZM \right) & = \frac{ \prob \left( \theta ~|~ m_1=1, z_1, \ZM \right) \prob( m_1 = 1) } {\prob \left( \theta ~|~ z_1, \ZM \right)},\\
 &= \frac{\alpha}{\alpha + \beta} 
 = \sigma \left( \log \left( \frac{\alpha}{\beta} \right) \right)
\label{eq:probratio}
 \end{align}}
where:
 \begin{align}
 \alpha &:=  \prob \left( \theta ~|~ m_1=1, z_1, \ZM \right) \prob( m_1 = 1) \\
 \beta  &:= \prob \left( \theta ~|~ m_1=0, z_1, \ZM  \right) \prob( m_1 = 0)
 \end{align}
Given that $\prob(m_1=1) = \lambda$,
 \begin{align}
 \log \left( \frac{\alpha}{\beta} \right) &=  \log  \left( \frac{\prob(\theta ~|~ m_1=1, z_1, \ZM)}{\prob(\theta ~|~ m_1=0, z_1, \ZM)} \right) + \log \left( \frac{\lambda}{1 - \lambda} \right), 
 \end{align}
which gives the expression for $\mathcal{M}(\theta, z_1)$.
\end{proof}

Note that Theorem \ref{thm:optimal_strategy} only relies on the fact that $ \theta $ given $\{z_1,..., z_n, m_1, ..., m_n\}$ is a random variable, but it does not make any assumption on the form of the distribution. %
In particular the loss $ \ell $ does not appear in the expression.

Theorem \ref{thm:optimal_loss} uses the assumption in Equation (\ref{eq:temperaturem}) to further explicit $ \mathcal{M}(\theta, z_1) $; we give its formal expression below, prove it, and analyze the expression qualitatively.
Let us first define the posterior over the parameters given samples $ z_2, \dots, z_{n} $ and memberships $ m_2, \dots, m_{n} $:
\begin{align}
p_{\ZM}(\theta) :=  \frac{ e^{ - \frac{1}{T} \sum_{i=2}^{n} m_i \ell(\theta, z_i) } }{\int_{t}  e^{ - \frac{1}{T} \sum_{i=2}^{n} m_i \ell(t, z_i) } dt}. 
\end{align}

\begin{theorem}
\label{thm:optimal_loss}
Given a parameter $ \theta $ and a sample $ z_1 $,
the optimal membership inference is given by:
\begin{align}
\label{eq:expectsigma}
\mathcal{M}(\theta, z_1) &= \esp_{\ZM} \left[  \sigma \left( s(z_1, \theta, p_{\ZM}) + t_{\lambda} \right) \right]
\end{align}
where we define the following score: %
\begin{align} 
\tau_p(z_1) &:= - T \log \left( \int_{t} e^{ -\frac{1}{T}\ell(t, z_1) } p(t) dt \right) 
\label{eq:taudefinition}
\\
s(z_1, \theta, p) &:=  \frac{1}{T} \left( \tau_p(z_1) - \ell (\theta, z_1) \right). %
\label{eq:sdefinition}
\end{align}

\end{theorem}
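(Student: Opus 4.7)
The plan is to start from the expression already derived in Theorem~\ref{thm:optimal_strategy} and explicitly compute the log-ratio of conditional densities inside the sigmoid using the concrete posterior form given in equation~(\ref{eq:temperaturem}). The target is to show that this log-ratio equals $s(z_1,\theta,p_{\ZM})$.

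First, I would write out the two conditional posteriors. Applying equation~(\ref{eq:temperaturem}) with $m_1=1$ and separating the contribution of $z_1$ from the rest:
\begin{align*}
\prob(\theta \mid m_1=1, z_1, \ZM)
&= \frac{e^{-\ell(\theta,z_1)/T}\, e^{-\frac{1}{T}\sum_{i=2}^n m_i \ell(\theta,z_i)}}
        {\int_t e^{-\ell(t,z_1)/T}\, e^{-\frac{1}{T}\sum_{i=2}^n m_i \ell(t,z_i)}\, dt}.
\end{align*}
Using the normalization constant $Z_{\ZM}=\int_t e^{-\frac{1}{T}\sum_{i\geq 2} m_i \ell(t,z_i)}\,dt$ of $p_{\ZM}$, this rewrites as
\[
\prob(\theta \mid m_1=1, z_1, \ZM) = \frac{e^{-\ell(\theta,z_1)/T}\, p_{\ZM}(\theta)}{\int_t e^{-\ell(t,z_1)/T}\, p_{\ZM}(t)\, dt}.
\]
For the case $m_1=0$ the $z_1$-dependent factor drops out of both numerator and denominator, so one simply gets $\prob(\theta \mid m_1=0, z_1, \ZM) = p_{\ZM}(\theta)$.

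Next, I would form the ratio, cancel the $p_{\ZM}(\theta)$ factor, and take the logarithm:
\[
\log \frac{\prob(\theta \mid m_1=1, z_1, \ZM)}{\prob(\theta \mid m_1=0, z_1, \ZM)}
= -\frac{\ell(\theta,z_1)}{T} - \log \int_t e^{-\ell(t,z_1)/T}\, p_{\ZM}(t)\, dt.
\]
By definition~(\ref{eq:taudefinition}), the second term equals $\tau_{p_{\ZM}}(z_1)/T$, so the whole expression collapses to $\tfrac{1}{T}(\tau_{p_{\ZM}}(z_1) - \ell(\theta,z_1)) = s(z_1,\theta,p_{\ZM})$ as defined in~(\ref{eq:sdefinition}).

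Plugging this log-ratio back into the expression for $\mathcal{M}(\theta,z_1)$ from Theorem~\ref{thm:optimal_strategy} yields exactly~(\ref{eq:expectsigma}) and completes the proof. There is no real obstacle here; the only thing to be careful about is the bookkeeping of normalization constants, and in particular verifying that the normalizer $Z_{\ZM}$ factors out cleanly from both the $m_1=1$ and $m_1=0$ expressions, so that the ratio reduces to the Gibbs factor $e^{-\ell(\theta,z_1)/T}$ divided by its average under $p_{\ZM}$.
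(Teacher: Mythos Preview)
Your proposal is correct and follows essentially the same route as the paper: both compute the two conditional posteriors from equation~(\ref{eq:temperaturem}), rewrite them in terms of $p_{\ZM}$ so that the ratio collapses to $e^{-\ell(\theta,z_1)/T}$ divided by its $p_{\ZM}$-average, identify the log-ratio with $s(z_1,\theta,p_{\ZM})$, and then invoke Theorem~\ref{thm:optimal_strategy}. The only cosmetic difference is that the paper carries the prior factors $\lambda,\,1-\lambda$ through via the $\alpha,\beta$ notation from the proof of Theorem~\ref{thm:optimal_strategy}, whereas you strip those off first and add $t_\lambda$ back at the end.
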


\begin{proof}

Singling out $m_1$ in Equation~(\ref{eq:temperaturem}) yields the following expressions for $\alpha$ and $\beta$: 
\begin{align}
\alpha &= \lambda 
\frac{ e^{- \frac{1}{T} \ell(\theta, z_1)} e^{ - \frac{1}{T} \sum_{i=2}^{n} m_i \ell(\theta, z_i) } }
     {\int_{t} e^{- \frac{1}{T}  \ell(t, z_1)}  e^{ -\frac{1}{T}\sum_{i=2}^{n} m_i \ell(t, z_i) }dt} \\
&= \lambda \frac{ e^{- \frac{1}{T}  \ell(\theta, z_1)} p_{\ZM}(\theta)}{\int_{t} e^{- \frac{1}{T}  \ell(t, z_1)} p_{\ZM}(t) dt},
\end{align}
and
\begin{align}
\beta &= (1-\lambda) \frac{e^{ - \frac{1}{T} \sum_{i=2}^{n} m_i \ell(\theta, z_i) } }{\int_{t} e^{ - \frac{1}{T} \sum_{i=2}^{n} m_i \ell(t, z_i) } dt} = (1-\lambda) p_{\ZM}(\theta).
\end{align}
Thus,
\begin{align}
\log \left( \frac{\alpha}{\beta} \right)  &= - \frac{\ell (\theta, z_1)}{T} 
- \log \left( \int_{t} e^{- \frac{1}{T}\ell(t, z_1))} p_{\ZM}(t) dt \right) + t_\lambda \label{eq:logsumexp} \nonumber \\
&=s(z_1, \theta, p_{\ZM}) + t_\lambda.
\end{align}

Then, Equation~(\ref{eq:probratio}) yields the expected result.
\end{proof}

The first observation in Theorem \ref{thm:optimal_loss} is that $ \mathcal{M}(\theta, z_1) $ does not depend on the parameters $ \theta $ beyond the evaluation of the loss $ \ell(\theta, \cdot)$: this strategy does not require, for instance, internal parameters of the model that a white-box attack could provide.
This means that if we can compute $\tau_p$ or approximate it well enough, then the optimal membership inference depends only on the loss. In other terms, asymptotically,
\textbf{the white-box setting does not provide any benefit compared to black-box membership inference.}

Let us analyze qualitatively the terms in the expression: %
Since $ \ZM $ is a training set, $ p_{\ZM} $ corresponds to a posterior over this training set, \ie, a typical distribution of trained parameters. 
$\tau_p(z_1)$ is the softmin of loss terms $ \ell(\cdot, z_1) $ over these typical models, and corresponds therefore to the {\em typical loss of sample $ z_1 $} under models that have not seen $ z_1 $.

The quantity $\tau_p(z_1)$ can be seen as a threshold, to which the loss $ \ell(\theta, z_1) $ is compared.
Around this threshold, when $ \ell(\theta, z_1) \approx \tau_p(z_1)$, then $ s \approx 0 $: since $ \sigma(t_{\lambda}) = \lambda $, the membership posterior probability $ \mathcal{M}(\theta, z_1)  $ is equal to $ \lambda $, and thus we have no information on $ m_1 $ beyond prior knowledge.
As the loss $ \ell(\cdot, z_1) $ gets lower than this threshold, $s$ becomes positive.
Since $ \sigma $ is non decreasing, when $s(z_1, \theta, p_{\ZM}) > 0 $, $ \mathcal{M}(\theta, z_1) > \lambda $ and thus we gain non-trivial membership information on $ z_1 $.

Another consequence of Theorem \ref{thm:optimal_loss} is that a higher temperature $ T $ decreases $ s $, and thus decreases $ \prob( m_1 = 1~|~\theta, z_1)  $: it corresponds to the intuition that more randomness in $ \theta $ protects the privacy of training data.

\subsection{Differential privacy and guarantees}

In this subsection we make the link with differential privacy. 

Differential privacy \citep{dwork2006calibrating} is a framework that allows to learn model parameters $ \theta $ while maintaining the confidentiality of data.
It ensures that even if a malicious attacker knows parameters $ \theta $ and samples $ z_i $, $ i \geq 2$, for which $ m_i = 1$, the privacy of $ z_1 $ is not compromised. 

\begin{definition}[$\epsilon$-differential privacy]
A machine learning algorithm is $ \epsilon $-differentially private if, for any choice of $ z_1$ and $\ZM$, 
\begin{align}
\label{eq:epsdifpriv}
\log \left( \frac{\prob(\theta ~|~ {m_1 =1, z_1, \ZM})}{\prob(\theta ~|~ {m_1=0, z_1, \ZM})} \right) < \epsilon.
\end{align}
\end{definition}

Note that this definition is slightly different from the one of \citet{dwork2006calibrating} in that we consider the removal of $ z_1 $ rather than its substitution with $ z' $. Additionally we consider probability densities instead of probabilities of sets, without loss of generality.

\begin{property}[$\epsilon$-differential privacy]
If the training is $ \epsilon $-differentially private, then: 
\begin{align}
\prob( m_1 = 1~|~\theta, z_1) &\leq \lambda + \frac{\epsilon}{4}.  \label{eq:diff_proba}
\end{align}
\end{property}

\begin{proof}
Combining Equation~(\ref{eq:epsdifpriv}) and the fact that $ \sigma(u) \leq \sigma(v) + \max(u-v, 0) / 4 $ (Appendix~\ref{sec:boundedsigma}), we have:
\begin{align}
  \sigma \left( \log  \left(
\frac{\prob(\theta ~|~ m_1=1, z_1, \ZM)}{\prob(\theta ~|~  m_1=0, z_1, \ZM)} \right) + t_{\lambda} \right) &\leq \sigma(t_{\lambda}) + \frac{\epsilon}{4} \nonumber \\
&= \lambda + \frac{\epsilon}{4}. 
\end{align}
Combining this expression with Theorem~\ref{thm:optimal_strategy} yields the result.
\end{proof}
Note that this bound gives a tangible sense of $ \epsilon $. 
In general, decreasing $ \epsilon $ increases privacy, but there is no consensus over ``good" values of $ \epsilon $; this bound indicates for instance that $ \epsilon = 0.01 $ would be sufficient for membership privacy.

$ \epsilon $-differential privacy gives strong membership inference guarantees, at the expense of a constrained training procedure resulting generally in a  loss of accuracy \cite{abadi16deep}.
However, if we assume that the attacker knows the $ z_i, i \geq 2 $ for which $m_i = 1$, $ \epsilon $-differential privacy is required to protect the privacy of $ z_ 1$. 
Depending on the information we have on $ z_i, i \geq 2 $, there is a continuum between differential privacy (all $ z_i$'s are known) and membership inference (only prior knowledge on $ z_i $).
In the case of membership inference, it suffices to have the following guarantee:

\begin{definition}[$(\epsilon, \delta)$ membership privacy]
The training is $(\epsilon, \delta)$-membership private  for
some $ \epsilon>0, \delta>0$ if with probability $ 1 - \delta $ over the choice of $ \ZM$:
\begin{align}
\int_{t} \ell(t, z_1) p_{\ZM}(t) dt - \ell (\theta, z_1) \leq \epsilon. \label{eq:guarantee}
\end{align}
\end{definition}

\begin{property} If the training is $(\epsilon, \delta)$-membership private, then: 
\begin{align}
\prob( m_1 = 1~|~\theta, z_1) \leq \lambda + \frac{\epsilon}{4T} + \delta.
\end{align}
\end{property}

\begin{proof}
Jensen's inequality states that for any distribution $p$ and any function $f$:
\begin{align}
 \int_{t} f(t) p(t) dt &\leq \log \left( \int_{t} e^{f(t)} p(t) dt \right),
\end{align}
hence the score $s$ from Equation~(\ref{eq:sdefinition}) verifies:
\begin{align}
s(z_1, \theta, p) \leq \frac{1}{T}\left(\int_{t} \ell(t, z_1) p(t) dt - \ell (\theta, z_1) \right) .
\end{align}
Thus, distinguishing the cases $\delta$ and $1-\delta$ in the expectation in Equation~(\ref{eq:expectsigma}), 
\begin{align}
\prob( m_1 = 1~|~\theta, z_1) &\leq \delta + (1 - \delta) \left( \lambda + \frac{\epsilon}{4T} \right) \\
&\leq \lambda + \frac{\epsilon}{4T} + \delta,
\end{align}
which gives the desired bound. 
\end{proof}

Membership privacy provides a \emph{post-hoc} guarantee on $ \theta, z_1 $.
Guarantees in the form of Equation (\ref{eq:guarantee}) can be obtained by PAC (Probably Approximately Correct) bounds.
\section{Approximations for membership inference}
\label{sec:practical}

Estimating the probability of Equation (\ref{eq:sdefinition}) mainly requires to compute the term $\tau_p$. 
Since its expression is intractable, we use approximations to derive concrete membership attacks (MA). 
We now detail these approximations, referred to as MAST (MA Sample Threshold), MALT (MA Loss Threshold) and MATT (MA Taylor Threshold).

\begin{figure*}[t]
  \begin{minipage}{0.49\linewidth}
    \includegraphics[width=0.99\linewidth]{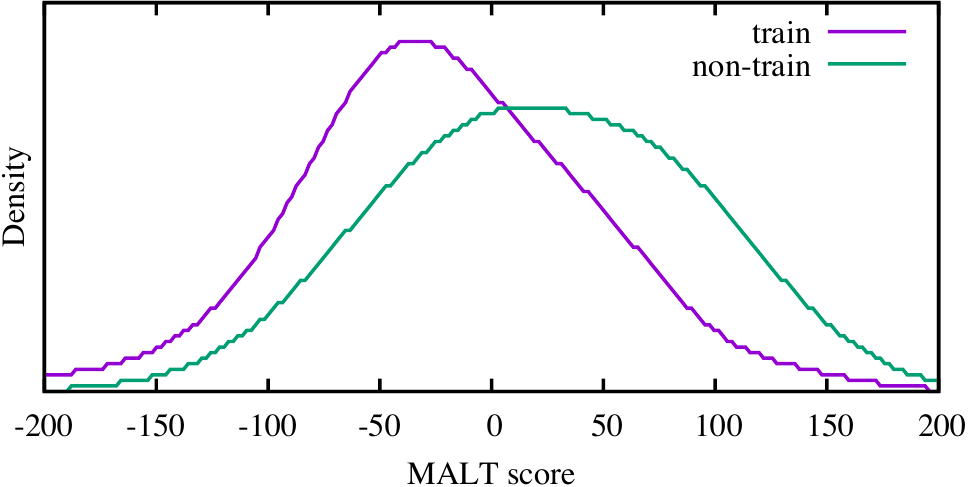}
  \end{minipage}
  \hfill
  \begin{minipage}{0.49\linewidth}
    \includegraphics[width=0.99\linewidth]{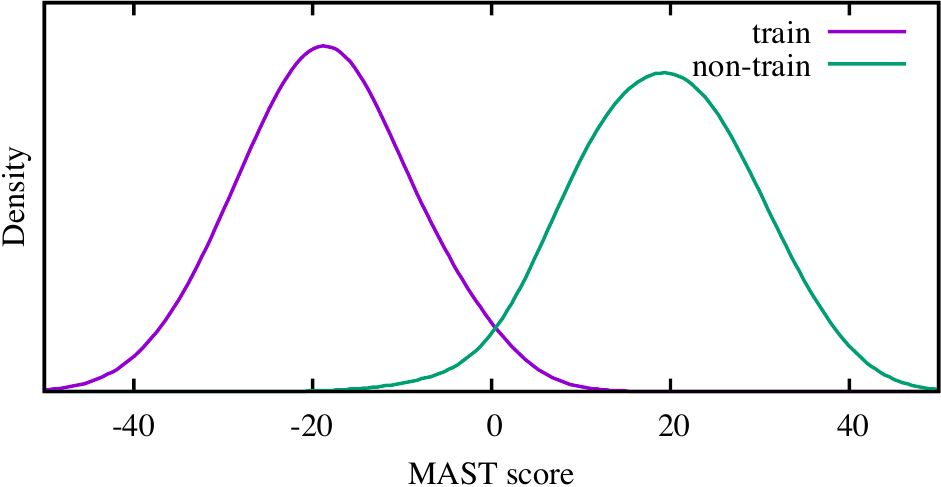}
  \end{minipage}
  \caption{\label{fig:gauss}
  Comparison of MALT and MALT for membership inference on the mean estimator for Gaussian data ($n=100$ samples in 2000 dimensions). 
  Distribution of scores $s$ used to distinguish between samples seen or not at training.
  \emph{MALT}: a single threshold is used for all the dataset;
  \emph{MAST}: each sample gets assigned a different threshold.
  MAST better separates training and non-training samples. 
  }
\end{figure*}

\subsection{MAST: Approximation of $\tau(z_1)$}

We first make the mean-field assumption that $ p_{\ZM}(t) $ does not depend on $\ZM$ (we note it $p$), and define
\begin{align}
\label{eq:deftauz}
\tau(z_1) :=  \log \left( \int_{t} e^{-\frac{1}{T}\ell(t, z_1)} p(t) dt \right).
\end{align}

The quantity $ \tau(\cdot) $ is a ``calibrating" term that reflects the difficulty of a sample. 
Intuitively, a low $ \tau(z_1) $ means that the sample $ z_1 $ is easy to predict, and thus a low value of $ \ell (\theta, z_1) $ does not necessarily indicate that $ z_1 $ belongs to the train set.
Thus, Theorem~\ref{thm:optimal_loss} gives the optimal attack model:
\begin{equation}
\boxed{s_\mathrm{MAST}(\theta, z_1) = - \ell(\theta, z_1)+ \tau (z_1). }
\end{equation}

\subsection{MALT: Constant $\tau$}

If we further assume that $ \tau(\cdot)$ is constant, the optimal strategy reduces to predicting that $ z_1 $ comes from the training set if its loss $ \ell (\theta, z_1) $ is lower than this threshold $ \tau $, and from the test set otherwise:
\begin{equation}
\label{eq:malt}
\boxed{s_\mathrm{MALT}(\theta, z_1) = - \ell(\theta, z_1)+ \tau.} 
\end{equation}

A similar strategy is proposed by \citet{yeom18privacyrisk} for Gaussian models.
\citet{carlini18secret} estimate a secret token in text datasets by comparing probabilities of the sentence ``My SSN is X" with various values of X.
Surprisingly, to the best of our knowledge, it has not been proposed in the literature to estimate the threshold $ \tau $ on public data, and to apply it for membership inference.
As we show in Section \ref{sec:experiments}, this simple strategy yields better results than shadow models.
Their attack models take as input the softmax activation $ \phi_{\theta}(x) $ and the class label $ y $, and predict whether the sample comes from the training or test set. 
For classification models, $ \ell(\theta, (x, y)) = - \log(\phi_{\theta}(x)_y) $. 
Hence the optimal attack performs:
\begin{equation}
s_\mathrm{MALT}(\theta, (x, y)) = \log(\phi_{\theta}(x)_y)+ \tau.
\end{equation}
In \citet{shokri16membershipinference}, we argue that the attack model essentially performs such an estimation, albeit in a non-explicit way.
In particular, we believe that the gap between \citet{shokri16membershipinference}'s method and ours is due to instabilities in the estimation of $ \tau $ and the numerical computation of the $ \log $, as the model is given only $ \phi_{\theta}(x)$.
As a side note, the expectation term in $ \ZM = {z_2, \dots, z_n, m_2, \dots, m_n} $ is very similar in spirit to the shadow models, and they can be viewed as a Monte-Carlo estimation of this quantity.
\vspace{-1em}
\mypar{An experiment with Gaussian data.}
We illustrate the difference between a MALT (global $ \tau $) and MAST (per-sample $ \tau(\cdot) $) on a simple toy example. 
Let's assume we estimate the mean $\mu$ of Gaussian data with unit variance.

We sample $ n $ values $ z _1, \dots, z_n $ from $ \dist = \mathcal{N}(\mu, I)$. 
The estimate of the mean is $\theta = \frac{1}{n'} \sum_{i=1}^n m_i z_i$ where $ n' = | \{i~|~m_i=1\} | $.
We have (see Appendix \ref{app:gauss} for derivations):
\begin{align}
\ell(\theta, z_i) &:= \frac{1}{2} \| z_i - \theta \|^2 \\
\tau(z_i) &= \frac{n'}{2(n'+1)} \| z_i - \mu\|^2 \\
\tau &= \frac{n'}{2(n'+1)} \esp \| z - \mu \|^2 = \frac{n'}{2(n'+1)} d. 
\end{align}

The expression of $\tau(z_i)$ shows that the ``difficulty" of sample $ z_i $ is its distance to $ \mu $, \ie, how untypical this sample is. %

Figure \ref{fig:gauss} shows the results with a global $ \tau $ or a per-sample $ \tau $: the per-sample $ \tau $ better separates the two distributions, leading to an increased membership inference accuracy.

\vspace{-1em}
\paragraph{MATT: Estimation with Taylor expansion}
\label{subsec:taylor}

We assume that the posterior induced by the loss $ \mathcal{L}(\theta) = \sum_{i=1}^n m_i\ell(\theta, z_i)$ is a Gaussian centered on $ \theta^* $, the minimum of the loss, with covariance $ C $.
This corresponds to the Laplace approximation of the posterior distribution. 
The inverse covariance matrix $ C^{-1} $ is asymptotically $ n $ times the Fisher matrix \cite{vandervaart98asymptoticstatistics}, which itself is the Hessian of the loss~\makebox{\cite{kullback97informationtheory}}:
\begin{align}
\prob(\theta~|~z_1, \ZM) &= \frac{1}{\sqrt{\det{2\pi H^{-1}}}} e^{-\frac{1}{2} (\theta - \theta^*)^T H (\theta - \theta^*)}. 
\end{align}
We denote by $\theta_0^*$ (resp. $\theta_1^*$) the mode of the Gaussian corresponding to $ \{z_2,...,z_n\} $ (resp. \{$ z_1,...,z_n$\}), and $ H_0 $ (resp. $ H_1 $) the corresponding Hessian matrix.
We assume that $ H $ is not impacted by removing $ z_1 $ from the training set, and thus $ H := H_0 \approx H_1 $ (cf. appendix \ref{app:hessian} for a more precise justification).
The log-ratio is therefore 
{\small
\begin{align}
\log & \left( \frac{\prob(\theta ~|~ m_1 = 1, z_1, \ZM)}{\prob(\theta ~|~ m_1 = 0, z_1, \ZM)} \right)  \label{eq:hessian_dl} \\
&= - \frac{1}{2} (\theta - \theta_1^*)^T H (\theta - \theta_1^*) +  \frac{1}{2} (\theta - \theta_0^*)^T H (\theta - \theta_0^*) \nonumber \\
&= (\theta - \theta_0^*)^T H (\theta_1^* - \theta_0^*) -  \frac{1}{2} (\theta_1^* - \theta_0^*)^T H (\theta_1^* - \theta_0^*). \nonumber 
\end{align}
}%
The difference $ \theta_1^* - \theta_0^* $ can be estimated using a Taylor expansion of the loss gradient  around $ \theta_0^*$ (see \eg \citet{koh2017understanding}):
\begin{align}
\theta_1^* - \theta_0^* \approx - H^{-1} \nabla_{\theta} \ell(\theta_0^*, z_1)
\end{align}
Combining this with Equation (\ref{eq:hessian_dl}) leads to
{\small
\begin{align}
\label{eq:twoterm}
- (\theta - \theta_0^*)^T  \nabla_{\theta} \ell(\theta_0^*, z_1) -  \frac{1}{2} \nabla_{\theta} \ell(\theta_0^*, z_1)^T H^{-1} \nabla_{\theta} \ell(\theta_0^*, z_1). 
\end{align}
}
We study the asymptotic behavior of this expression when $n\rightarrow \infty$. 
On the left-hand side, the parameters $\theta$ and $\theta_0^*$ are estimates of the optimal $\theta^*$, and under mild conditions, the error of the estimated parameters is of order $1/\sqrt{n}$. 
Therefore the difference $ \theta - \theta_0^* $ is of order $ 1 / \sqrt{n}$. 
On the right-hand side, the matrix $H$ is the summation of $n$ sample-wise Hessian matrices. 
Therefore, asymptotically, the right-hand side shrinks at a rate $1/n$, which is negligible compared to the other, which shrinks at $1/\sqrt{n}$. 
In addition to the asymptotic reasoning, we verified this approximation experimentally. 
Thus, we approximate Equation~(\ref{eq:twoterm}) to give the following score:
\begin{align}
\label{eq:dl_grad}
\boxed{
s_\mathrm{MATT}(\theta, z_1) = - (\theta - \theta_0^*)^T  \nabla_{\theta} \ell(\theta_0^*, z_1). }
\end{align}

Equation (\ref{eq:dl_grad}) has an intuitive interpretation: parameters~$ \theta $ were trained using $ z_1 $ if their difference with a set of parameters trained without $ z_1 $ (i.e. $\theta_0^*$) is aligned with the direction of the update $- \nabla_{\theta} \ell(\theta_0^*, z_1) $.

\section{Membership inference algorithms}
\label{sec:algorithms}

In this section, we detail how the approximations of $ s(\theta, z_1, p) $ are employed to perform membership inference. 

We assume that a machine learning model has been trained, yielding parameters $\theta$. 
We assume also that similar models can be re-trained with different training sets.
Given a sample $z_1$, we want to decide whether $z_1$ belongs to the training set.

\subsection{The 0-1 baseline}
We consider as a baseline the ``0-1" heuristic, which predicts that $z_1$ comes from the training set if the class is predicted correctly, and from the test set if the class is predicted incorrectly.
We note $p_{\text{train}} $ (resp. $ p_{\text{test}}$) the classification accuracy on the training (resp. held-out) set. 
The accuracy of the heuristic is (see Appendix~\ref{app:probaderiv} for derivations):
\begin{equation}
p_{\text{bayes}}  = \lambda p_{\text{train}} + (1-\lambda) (1-p_{\text{test}}).
\end{equation}

For example when $\lambda=1/2$, since $p_{\text{train}} \geq p_{\text{test}}$ this heuristic is better than random guessing (accuracy $1/2$) and the improvement is proportional to the overfitting gap $p_{\text{train}} - p_{\text{test}}$.
%

\iffalse

\mypar{Maximum Accuracy Threshold (\threshold).} 
%
%
\cite{yeom18privacyrisk} propose an attack on the loss value: a sample is deemed part of the training set if its loss is below a threshold $ \tau $. 
If $ F_{\text{train}} $ (resp. $  F_{\text{heldout}} $) is the cdf of the loss on the train (resp. held out), the accuracy of the \threshold is:
\begin{equation}
p_{\text{threshold}} = \max_{\tau} ~1/2 + 1/2 \left(F_{\text{train}}(\tau) - F_{\text{heldout}}(\tau)\right)
\end{equation}

\matthijs{update with $\lambda \ne 1/2$}

\fi

\subsection{Making hard decisions from scores}
\label{sec:scorethesh}
Variants of our method provide different estimates of $ s(\theta, z_1, p) $.
Theorem \ref{thm:optimal_loss} shows that this score has to be passed through a sigmoid function, but since it is an increasing function, the threshold can be chosen directly on these scores. %
Estimation of this threshold has to be conducted on simulated sets, for which membership information is known.
We observed that there is almost no difference between chosing the threshold on the set to be tested and cross-validating it. 
This is expected, as a one-dimensional parameter the threshold is not prone to overfitting.

\subsection{Membership algorithms}
\label{subsec:membershipalgorithms}

\textbf{MALT: Threshold on the loss.}
Since $\tau$ in Equation (\ref{eq:malt}) is constant, and using the invariance to increasing functions, we need only to use loss value for the sample, $\ell(\theta, z_1)$. 

\textbf{MAST: Estimating $\tau(z_1)$.}
To estimate $\tau(z_1)$ in Equation~(\ref{eq:deftauz}), we train several models with different subsamples of the training set. 
This yields a set of per-sample losses for $z_1$ that are averaged into an estimate of $\tau(z_1)$. 

\textbf{MATT: the Taylor approximation.}
We run the training on a separate set to obtain $\theta_0^*$. 
Then we take a gradient step over the loss to estimate the approximation in Equation~(\ref{eq:dl_grad}).
Note that this strategy is not compatible with neural networks because the assumption that parameters lie around a unique global minimum does not hold. 
In addition, parameters from two different networks $ \theta $ and $ \theta_0^* $ cannot be compared directly as neural networks that express the same function can have very different parameters (\eg because channels can be permuted arbitrarily).

\section{Experiments}
\label{sec:experiments}

In this section we evaluate the membership inference methods on machine-learning tasks of increasing complexity.

\subsection{Evaluation}

We evaluate three metrics: the accuracy of the attack, and the mean average precision when detecting either from train ({mAP\textsubscript{train}}) or test ({mAP\textsubscript{test}}) images. 
For the mean average precision, the scores need not to be thresholded, the metric is invariant to mapping by any increasing function.

\begin{table}[t]
\caption{
	\label{tab:expe_cifar_accu}
	Accuracy (top) and mAP (bottom) of membership inference on the 2-class logistic regression with simple CNN features, for different types of attacks. Note that 0-1 corresponds to the baseline~\cite{yeom18privacyrisk}. 
	We do not report ~\citet{shokri16membershipinference} since Table \ref{tab:shadow_comparison} shows MALT performs better.
	Results are averaged over $ 100 $ different random seeds.
}
\begin{center}
\begin{tabular}{@{}l|rr|rrr@{}}
\toprule
            & \multicolumn{2}{c|}{Model accuracy} & \multicolumn{3}{c}{Attack accuracy}\\
n	     		& train		& validation	& $0-1$	& MALT	& MATT \\ \midrule
400			& 97.9		& 93.8		& 52.1	& 54.4	& \bf{57.0} \\
1000			& 97.3		& 94.5		& 51.4	& 52.6	& \bf{54.5} \\
2000			& 96.8		& 95.2		& 50.8	& 51.7	& \bf{53.0} \\
4000			& 97.7		& 95.6		& 51.0	& 51.4	& \bf{52.1} \\
6000			& 97.5		& 96.0		& 50.7	& 51.0	& \bf{51.8} \\
\bottomrule
\end{tabular}

\vspace{6mm}

\begin{tabular}{@{}lrrrrr@{}}
\toprule
& \multicolumn{2}{c}{mAP\textsubscript{test}}	& \phantom{c}	& \multicolumn{2}{c}{mAP\textsubscript{train}} \\
\cmidrule{2-3} \cmidrule{5-6}
n 		& MALT 	& MATT 	&& MALT 	& MATT \\ \midrule
400		& 55.8	& 60.1	&& 51.9	& 57.1  \\
1000		& 53.2	& 56.6	&& 50.5	& 54.8 \\
2000		& 51.8	& 54.4	&& 50.4	& 53.4 \\
4000		& 51.9	& 53.7	&& 50.1	& 52.6 \\
6000		& 51.4	& 53.0	&& 50.2	& 52.2 \\
\bottomrule
\end{tabular}
\end{center}
\end{table}

\subsection{Logistic regression}

CIFAR-10 is a dataset of $ 32 \times 32 $ pixel images grouped in 10 classes.
In this subsection, we consider two of the classes (\emph{truck} and \emph{boat}) and vary the number of training images from $n=400$ to $6,000$. 

We train a logistic regression to separate the two classes. 
The logistic regression takes as input features extracted from a pretrained Resnet18 on CIFAR-100 (a disjoint dataset).
The regularization parameter $ C $ of the logistic regression is cross-validated on held-out data.

We assume that $\lambda=1/2$ ($n/2$ training images and $n/2$ test images). 
We also reserve $n/2$ images to estimate $\theta_0^*$ for the MATT method.
In both experiments, we report the peak accuracy obtained for the best threshold (cf. Section~\ref{sec:scorethesh}).

Table \ref{tab:expe_cifar_accu} shows the results of our experiments, in terms of accuracy and mean average precision. 
In accuracy, the Taylor expansion method MATT outperforms the MALT method, for any number of training instances $n$, which itself obtains much better results than the naive 0-1 attack.

Interestingly, it shows a difference between MALT and MATT: both perform similarly in terms of mAP\textsubscript{test}, but MATT slightly outperforms MALT in mAP\textsubscript{train}.
The main reason for this difference is that the MALT attack is asymmetric: it is relatively easy to predict that elements come from the test set, as they have a high loss, but elements with a low loss can come either from the train set or the test set.
\subsection{Small convolutional network}

In this section we train a small convolutional network\footnote{
	2 convolutional and 2 linear layers with Tanh non-linearity.
} with the same architecture as \citet{shokri16membershipinference} on the CIFAR-10 dataset, using a training set of $15,000$ images. 
Our model is trained for $ 50 $ epochs with a learning rate of $ 0.001 $.
We assume a balanced prior on membership ($\lambda = 1/2$). %

We run the MALT and MAST attacks on the classifiers. 
As stated before, the MATT attack cannot be carried out on convolutional networks.
For MAST, the threshold is estimated from 30 shadow models: we train these 30 shadow models on $ 15,000 $ images chosen among the train+held-out set ($30,000$ images).
Thus, for each image, we have on average  15  models trained on it and  15  models not trained on it: we estimate the threshold for this image by taking the value $ \tau(z) $ that separates the best the two distributions: this corresponds to a non-parametric estimation of $ \tau(z) $.

Table~\ref{tab:shadow_comparison} shows that our estimations outperform the related works. 
Note that this setup gives a slight advantage to MAST as the threshold is estimated directly for each sample under investigation, whereas MALT first estimates a threshold, and then applies it to never-seen data.
Yet, in contrast with the experiment on Gaussian data, MAST performs only slightly better than MALT. 
Our interpretation for this is that the images in the training set have a high variability, so it is difficult to obtain a good estimate of $\tau(z_1)$. 
Furthermore, our analysis of the estimated thresholds $ \tau(z_1) $ show that they are very concentrated around a central value $ \tau $, so their impact when added to the scores is limited. 

Therefore, in the following experiment we focus on the MALT attack.

\begin{table}[t]
\centering
\caption{\label{tab:shadow_comparison}
	Accuracy of membership attacks on the CIFAR-10 classification with a simple neural network. 
	The numbers for the related works are from the respective papers. \medskip
}
\begin{tabular}{@{}lr@{}}
\toprule
Method										&	Accuracy	\\ \midrule
0-1~\cite{yeom18privacyrisk}						&	69.4		\\
Shadow models~\cite{shokri16membershipinference}	&	73.9		\\
MALT										&	\bf{77.1}	\\
\midrule
MAST										&	77.6		\\
\bottomrule
\end{tabular}
\vspace{-5pt}
\end{table}

\subsection{Evaluation on Imagenet}
\label{sec:baselines}

We evaluate a real-world dataset and tackle classification with large neural networks on the Imagenet dataset~\citep{deng2009imagenet,russakovsky15imagenet}, which contains 1.2 million images partitioned into 1000 categories. 
We divide Imagenet equally into two splits, use one for training and hold out the rest of the data.

We experiment with the popular VGG-16 \citep{simonyan15vgg} and Resnet-101 \citep{he16resnet} architectures.
The model is learned in $ 90 $ epochs, with an initial learning rate of $ 0.01 $, divided by $ 10 $ every $ 30 $ epochs. 
Parameter optimization is conducted with SGD with a momentum of $ 0.9 $, a weight decay of $ 10^{-4} $, and a batch size of $ 256 $.

We conduct the membership inference test by running the 0-1 attack and MALT.
An important factor for the success of the attacks is the amount of data augmentation. 
To assess the effect of data augmentation, we train different networks with varying data augmentation: None, Flip+Crop$\pm$5, Flip+Crop (by increasing intensity).

Table~\ref{tab:attack_baselines} shows that data augmentation reduces the gap between the training and the held-out accuracy. 
This decreases the accuracy of the Bayes attack and the MALT attack. 
As we can see, without data augmentation, it is possible to guess with high accuracy if a given image was used to train a model (about $90$\% with our approach, against $77$\% for existing approaches).
Stronger data augmentation reduces the accuracy of the attacks, that still remain above $ 64\%$.

\begin{table}[t]
\centering
\caption{\label{tab:attack_baselines}
Imagenet classification with deep convolutional networks: Accuracy of membership inference attacks of the models. \medskip
}
\begin{tabular}{@{}llcc@{}}
\toprule
Model     		& Augmentation 		& 0-1	& MALT \\ \midrule
Resnet101 	& None				& 76.3	& 90.4 \\ 
			& Flip, Crop $\pm5$		& 69.5	& 77.4 \\ 
			& Flip, Crop 			& 65.4	& 68.0 \\ \midrule
VGG16     	& None   				& 77.4	& 90.8  \\
    	 		& Flip, Crop $\pm5$		& 71.3	& 79.5  \\
     			& Flip, Crop			& 63.8	& 64.3  \\ \bottomrule
\end{tabular}
\end{table}

\section{Conclusion}
\label{sec:conclusion}

This paper has addressed the problem of membership inference by adopting a probabilistic point of view. 
This led us to derive the optimal inference strategy. 
This strategy, while not explicit and therefore not applicable in practice, does not depend on the parameters of the classifier if we have access to the loss. 
Therefore, a main conclusion of this paper is to show that, asymptotically, white-box inference does not provide more information than an optimized black-box setting. 

We then proposed two approximations that lead to three concrete strategies. 
They outperform competitive strategies for a simple logistic problem, by a large margin for our most sophisticated approach (MATT). 
Our simplest strategy (MALT) is applied to the more complex problem of membership inference from a deep convolutional network on Imagenet, and significantly outperforms the baseline.

\clearpage

\bibliography{egbib}

\begin{thebibliography}{29}
\providecommand{\natexlab}[1]{#1}
\providecommand{\url}[1]{\texttt{#1}}
\expandafter\ifx\csname urlstyle\endcsname\relax
  \providecommand{\doi}[1]{doi: #1}\else
  \providecommand{\doi}{doi: \begingroup \urlstyle{rm}\Url}\fi

\bibitem[Abadi et~al.(2016)Abadi, Chu, Goodfellow, McMahan, Mironov, Talwar,
  and Zhang]{abadi16deep}
Martin Abadi, Andy Chu, Ian Goodfellow, Brendan McMahan, Ilya Mironov, Kunal
  Talwar, and Li~Zhang.
\newblock Deep learning with differential privacy.
\newblock In \emph{CCS}, 2016.

\bibitem[Ateniese et~al.(2015)Ateniese, Mancini, Spognardi, Villani, Vitali,
  and Felici]{ateniese2015hacking}
Giuseppe Ateniese, Luigi~V Mancini, Angelo Spognardi, Antonio Villani, Domenico
  Vitali, and Giovanni Felici.
\newblock Hacking smart machines with smarter ones: How to extract meaningful
  data from machine learning classifiers.
\newblock \emph{IJSN}, 2015.

\bibitem[Bassily et~al.(2016)Bassily, Nissim, Smith, Steinke, Stemmer, and
  Ullman]{bassily2016algorithmic}
Raef Bassily, Kobbi Nissim, Adam Smith, Thomas Steinke, Uri Stemmer, and
  Jonathan Ullman.
\newblock Algorithmic stability for adaptive data analysis.
\newblock In \emph{STOC}, 2016.

\bibitem[Biggio et~al.(2014)Biggio, Corona, Nelson, Rubinstein, Maiorca,
  Fumera, Giacinto, and Roli]{Biggio14securitysvm}
Battista Biggio, Igino Corona, Blaine Nelson, Benjamin I.~P. Rubinstein, Davide
  Maiorca, Giorgio Fumera, Giorgio Giacinto, and Fabio Roli.
\newblock \emph{Security Evaluation of Support Vector Machines in Adversarial
  Environments}.
\newblock Springer International Publishing, 2014.

\bibitem[Bojanowski \& Joulin(2017)Bojanowski and
  Joulin]{bojanowski2017unsupervised}
Piotr Bojanowski and Armand Joulin.
\newblock Unsupervised learning by predicting noise.
\newblock In \emph{ICML}, 2017.

\bibitem[Carlini et~al.(2018)Carlini, Liu, Kos, Erlingsson, and
  Song]{carlini18secret}
Nicholas Carlini, Chang Liu, Jernej Kos, {\'U}lfar Erlingsson, and Dawn Song.
\newblock The secret sharer: Measuring unintended neural network memorization
  \& extracting secrets.
\newblock \emph{arXiv preprint arXiv:1802.08232}, 2018.

\bibitem[Deng et~al.(2009)Deng, Dong, Socher, Li, Li, and
  Fei-Fei]{deng2009imagenet}
Jia Deng, Wei Dong, Richard Socher, Li-Jia Li, Kai Li, and Li~Fei-Fei.
\newblock Imagenet: A large-scale hierarchical image database.
\newblock In \emph{CVPR}, 2009.

\bibitem[Dosovitskiy et~al.(2014)Dosovitskiy, Springenberg, Riedmiller, and
  Brox]{dosovitskiy2014discriminative}
Alexey Dosovitskiy, Jost~Tobias Springenberg, Martin Riedmiller, and Thomas
  Brox.
\newblock Discriminative unsupervised feature learning with convolutional
  neural networks.
\newblock In \emph{NIPS}, 2014.

\bibitem[Dwork et~al.(2006)Dwork, McSherry, Nissim, and
  Smith]{dwork2006calibrating}
Cynthia Dwork, Frank McSherry, Kobbi Nissim, and Adam Smith.
\newblock Calibrating noise to sensitivity in private data analysis.
\newblock In \emph{TCC}, 2006.

\bibitem[Dwork et~al.(2015)Dwork, Smith, Steinke, Ullman, and
  Vadhan]{dwork2015traceability}
Cynthia Dwork, Adam Smith, Thomas Steinke, Jonathan Ullman, and Salil Vadhan.
\newblock Robust traceability from trace amounts.
\newblock In \emph{Proceedings of the Symposium on the Foundations of Computer
  Science}, 2015.

\bibitem[Hayes et~al.(2017)Hayes, Melis, Danezis, and
  De~Cristofaro]{hayes17logan}
Jamie Hayes, Luca Melis, George Danezis, and Emiliano De~Cristofaro.
\newblock Logan: evaluating privacy leakage of generative models using
  generative adversarial networks.
\newblock \emph{arXiv preprint arXiv:1705.07663}, 2017.

\bibitem[He et~al.(2016)He, Zhang, Ren, and Sun]{he16resnet}
Kaiming He, Xiangyu Zhang, Shaoqing Ren, and Jian Sun.
\newblock Deep residual learning for image recognition.
\newblock In \emph{CVPR}, 2016.

\bibitem[Koh \& Liang(2017)Koh and Liang]{koh2017understanding}
Pang~Wei Koh and Percy Liang.
\newblock Understanding black-box predictions via influence functions.
\newblock In \emph{ICML}, 2017.

\bibitem[Krueger et~al.(2017)Krueger, Ballas, Jastrzebski, Arpit, Kanwal,
  Maharaj, Bengio, Fischer, Courville, Lacoste-Julien, and
  Bengio]{krueger17memorization}
David Krueger, Nicolas Ballas, Stanislaw Jastrzebski, Devansh Arpit,
  Maxinder~S. Kanwal, Tegan Maharaj, Emmanuel Bengio, Asja Fischer, Aaron
  Courville, Simon Lacoste-Julien, and Yoshua Bengio.
\newblock A closer look at memorization in deep networks.
\newblock In \emph{ICML}, 2017.

\bibitem[Kullback(1997)]{kullback97informationtheory}
S.~Kullback.
\newblock \emph{{Information Theory And Statistics}}.
\newblock Dover Publications, 1997.

\bibitem[Long et~al.(2018)Long, Bindschaedler, Wang, Bu, Wang, Tang, Gunter,
  and Chen]{long18understanding}
Yunhui Long, Vincent Bindschaedler, Lei Wang, Diyue Bu, Xiaofeng Wang, Haixu
  Tang, Carl~A Gunter, and Kai Chen.
\newblock Understanding membership inferences on well-generalized learning
  models.
\newblock \emph{arXiv preprint arXiv:1802.04889}, 2018.

\bibitem[Polyak \& Juditsky(1992)Polyak and Juditsky]{polyak1992averaging}
B.~T. Polyak and A.~B. Juditsky.
\newblock Acceleration of stochastic approximation by averaging.
\newblock \emph{SIAM J. Control Optim.}, 1992.

\bibitem[Rubinstein et~al.(2009)Rubinstein, Bartlett, Huang, and
  Taft]{rubinstein09learning}
Benjamin~IP Rubinstein, Peter~L Bartlett, Ling Huang, and Nina Taft.
\newblock Learning in a large function space: Privacy-preserving mechanisms for
  {SVM} learning.
\newblock \emph{arXiv:0911.5708}, 2009.

\bibitem[Russakovsky et~al.(2015)Russakovsky, Deng, Su, Krause, Satheesh, Ma,
  Huang, Karpathy, Khosla, Bernstein, Berg, and Fei-Fei]{russakovsky15imagenet}
Olga Russakovsky, Jia Deng, Hao Su, Jonathan Krause, Sanjeev Satheesh, Sean Ma,
  Zhiheng Huang, Andrej Karpathy, Aditya Khosla, Michael Bernstein,
  Alexander~C. Berg, and Li~Fei-Fei.
\newblock {ImageNet Large Scale Visual Recognition Challenge}.
\newblock \emph{IJCV}, 2015.

\bibitem[Salem et~al.(2019)Salem, Zhang, Humbert, Berrang, Fritz, and
  Backes]{salem2019leaks}
Ahmed Salem, Yang Zhang, Mathias Humbert, Pascal Berrang, Mario Fritz, and
  Michael Backes.
\newblock Ml-leaks: Model and data independent membership inference attacks and
  defenses on machine learning models.
\newblock In \emph{NCSS}, 2019.

\bibitem[Sankararaman et~al.(2009)Sankararaman, Obozinski, Jordan, and
  Halperin]{sankararaman2009genomicprivacy}
Sriram Sankararaman, Guillaume Obozinski, Michael~I. Jordan, and Eran Halperin.
\newblock Genomic privacy and limits of individual detection in a pool.
\newblock \emph{Nature Genetics}, 2009.

\bibitem[Shokri et~al.(2017)Shokri, Stronati, and
  Shmatikov]{shokri16membershipinference}
Reza Shokri, Marco Stronati, and Vitaly Shmatikov.
\newblock Membership inference attacks against machine learning models.
\newblock \emph{IEEE Symp. Security and Privacy}, 2017.

\bibitem[Simonyan \& Zisserman(2014)Simonyan and Zisserman]{simonyan15vgg}
Karen Simonyan and Andrew Zisserman.
\newblock Very deep convolutional networks for large-scale image recognition.
\newblock In \emph{ICLR}, 2014.

\bibitem[van~der Vaart(1998)]{vandervaart98asymptoticstatistics}
A.~W. van~der Vaart.
\newblock \emph{Asymptotic statistics}.
\newblock Cambridge Series in Statistical and Probabilistic Mathematics.
  Cambridge University Press, 1998.

\bibitem[Wang et~al.(2015)Wang, Fienberg, and Smola]{wang2015privacyforfree}
Yu-Xiang Wang, Stephen Fienberg, and Alex Smola.
\newblock Privacy for free: Posterior sampling and stochastic gradient monte
  carlo.
\newblock In \emph{ICML}, 2015.

\bibitem[Wang et~al.(2016)Wang, Lei, and Fienberg]{wang2016average}
Yu-Xiang Wang, Jing Lei, and Stephen~E Fienberg.
\newblock On-average {KL}-privacy and its equivalence to generalization for
  max-entropy mechanisms.
\newblock In \emph{PSD}. Springer, 2016.

\bibitem[Welling \& Teh(2011)Welling and Teh]{welling2011langevin}
Max Welling and Yee~Whye Teh.
\newblock Bayesian learning via stochastic gradient langevin dynamics.
\newblock In \emph{ICML}, 2011.

\bibitem[Yeom et~al.(2018)Yeom, Giacomelli, Fredrikson, and
  Jha]{yeom18privacyrisk}
Samuel Yeom, Irene Giacomelli, Matt Fredrikson, and Somesh Jha.
\newblock Privacy risk in machine learning: Analyzing the connection to
  overfitting.
\newblock In \emph{CSF}, 2018.

\bibitem[Zhang et~al.(2017)Zhang, Bengio, Hardt, Recht, and
  Vinyals]{zhang16understanding}
Chiyuan Zhang, Samy Bengio, Moritz Hardt, Benjamin Recht, and Oriol Vinyals.
\newblock Understanding deep learning requires rethinking generalization.
\newblock In \emph{ICLR}, 2017.

\end{thebibliography}
\bibliographystyle{iclr2019_conference}

\clearpage

\appendix
\begin{appendices}
\section{Derivations}

\subsection{Accuracy of the 0-1 attack}
\label{app:probaderiv}

\newcommand{\TP}{\mathrm{TP}}
\newcommand{\FP}{\mathrm{TP}}
\newcommand{\shorteq}{\textrm{=}}

We note $g_1$ the binary random variable that indicates whether $z_1$ was classified correctly, and thus considered part of the training set by the 0-1 attack.
The attack is accurate if $ g_1 = 1 $ on training images and $ g_1 = 0 $ on other images.
This happens with probability 
{\small
\begin{align}
p_{\text{bayes}} & = \prob(m_1 = g_1) \nonumber \\
&= \prob(g_1 \shorteq 1 ~|~ m_1 \shorteq 1 ) \prob(m_1 \shorteq 1) + \prob(g_1 \shorteq 0 ~|~ m_1 \shorteq 0 ) \prob(m_1 \shorteq 0) \nonumber \\
&= \lambda p_\mathrm{train} + (1-\lambda) (1 - p_\mathrm{test}).
\end{align}
}

\subsection{Gaussian data}
\label{app:gauss}

\mypar{Estimation of average distribution.} 
We assume without loss of generality that $ \mu = 0 $.
$ \theta $ is the mean of $ n $ Gaussian variables, centered on $ \mu $ with covariance $ I $.
Thus, $ \theta $ follows a Gaussian distribution, of variance $ \frac{1}{n} I $.
\begin{align}
\int_t e^{- \ell(z, t)} p(t) dt &= \frac{1}{\sqrt{ \det{\frac{2 \pi}{n} I } }} \int_t e^{\frac{- \| z - t \|^2 - n \| t\|^2}{2} } dt
\end{align}

Denoting $ \omega := \frac{z}{n+1} $, we have
\begin{align}
n \| t \|^2 + \| z - t \|^2 = (n + 1) \| t - \omega \|^2 + \frac{n}{n+1} \| z \|^2,
\end{align}
hence
\begin{align}
\int_t e^{ \frac{- \| z - t \|^2 - n \| t\|^2}{2} } dt &= \sqrt{\det{ \frac{2 \pi }{n+1} I }} e^ { - \frac{n \| z \|^2}{2(n+1)} }.
\end{align}
We have:
\begin{align}
\log \left( \int_t e^{- \ell(z, t) } p(t) dt \right) &= C - \frac{n}{2(n+1)} \| z \|^2
\end{align}

\subsection{Bound on variations of a sigmoid}
\label{sec:boundedsigma}

We show that 
\begin{align}
\sigma(u) \le \sigma(v) + |u - v|_{+}/4~~~~~\forall u, v \in \mathbb{R}. 
\label{eq:boundedsigma}
\end{align}

Since $\sigma$ is increasing, the relation is obvious for $v>u$. 

For $u>v$, we observe that 
\begin{align}
\sup_u |\sigma'(u)| = \sup_u \frac{e^{-u}}{(1 + e^{-u})^2} = \frac{1}{4}.
\end{align}
Thus, $\sigma$ is Lipschitz-continuous with constant $1/4$, which entails Equation~(\ref{eq:boundedsigma}).

\subsection{Hessian approximations}
\label{app:hessian}
We give here a rough justification of the approximation conducted in the MATT paragraph of Section \ref{sec:algorithms}.

Equation (\ref{eq:hessian_dl}) writes:
\begin{align}
\log & \left( \frac{\prob(\theta ~|~ m_1 = 1, z_1, \ZM)}{\prob(\theta ~|~ m_1 = 0, z_1, \ZM)} \right) \\
&\approx - (\theta - \theta_1^*)^T H (\theta - \theta_1^*) + (\theta - \theta_0^*)^T H (\theta - \theta_0^*).  
\end{align}
This approximation holds up to the following quantity:
\begin{align}
\delta &= \underbrace{-\frac{1}{2} \log \left( \frac{\det{H_1}}{\det{H_0}} \right) }_{\delta_1}  + \underbrace{(\theta_1^* - \theta_0^*)^T (H_1 - H_0) (\theta_1^* - \theta_0^*)}_{\delta_2}. 
\end{align}
We reason qualitatively in orders of magnitude. 
$ \theta_0^* - \theta_1^* $ has order of magnitude $ 1 / n $, and $ H_1 - H_0 $ has order of magnitude $1$, so $ \delta_2 $ has order of magnitude $ 1/n^2$.
As for $ \delta_1 $, we observe that $ H_0^{-1} (H_1 - H_0)  $ has order of magnitude $ 1/n $ and therefore 
\begin{align}
\delta_1 &= -\frac{1}{2} \log \left( \frac{\det{H_1}}{\det{H_0}} \right) \\
&=  -\frac{1}{2} \log \left( \det{I + H_0^{-1} (H_1 - H_0)  }  \right) \\
&\approx -\Tr(  H_0^{-1} (H_1 - H_0) ). 
\end{align}
Hence, $ \delta_1 $ has order of magnitude $ 1/n $ as well. 
Since the main term in Equation (\ref{eq:hessian_dl}) is in the order of $ 1/ \sqrt{n} $, $ \delta_1 $ and $ \delta_2 $ can be safely neglected.

\end{appendices}

\end{document}